\theoremstyle{plain}
\newtheorem{theorem}{Theorem}[section]
\theoremstyle{definition}
\theoremstyle{remark}
\icmltitlerunning{ {Reinformer: Max-Return Sequence Modeling for Offline RL}}
\begin{document}

\twocolumn[
\icmltitle{Rein\textit{for}mer: Max-Return Sequence Modeling for Offline RL}




\begin{icmlauthorlist}
\icmlauthor{Zifeng Zhuang}{zju,westlake}
\icmlauthor{Dengyun Peng}{westlake,hit}
\icmlauthor{Jinxin Liu}{westlake}
\icmlauthor{Ziqi Zhang}{westlake}
\icmlauthor{Donglin Wang}{westlake}
\end{icmlauthorlist}

\icmlaffiliation{zju}{Zhejiang University}
\icmlaffiliation{westlake}{School of Engineering, Westlake University}
\icmlaffiliation{hit}{Harbin Institute of Technology}

\icmlcorrespondingauthor{Donglin Wang}{wangdonglin@westlake.edu.cn}

\icmlkeywords{Machine Learning, ICML}

\vskip 0.3in
]



\printAffiliationsAndNotice{}  

\begin{abstract}
As a data-driven paradigm, offline reinforcement learning (RL) has been formulated as sequence modeling that conditions on the hindsight information including returns, goal or future trajectory. 
Although promising, this supervised paradigm overlooks the core objective of RL that maximizes the return.
This overlook directly leads to the lack of trajectory stitching capability that affects the sequence model learning from sub-optimal data. 
In this work, we introduce the concept of max-return sequence modeling which integrates the goal of maximizing returns into existing sequence models. 
We propose \textbf{Rein}\textbf{\textit{for}}ced Trans\textbf{\textit{for}mer} (\textbf{Rein\textit{for}mer}), indicating the sequence model is reinforced by the RL objective. 
\textbf{Rein\textit{for}mer} additionally incorporates the objective of maximizing returns in the training phase, aiming to predict the maximum future return within the distribution.
During inference, this in-distribution maximum return will guide the selection of optimal actions. 
Empirically, \textbf{Rein\textit{for}mer} is competitive with classical RL methods on the D4RL benchmark and outperforms state-of-the-art sequence model particularly in trajectory stitching ability. 
Code is public at \url{https://github.com/Dragon-Zhuang/Reinformer}.
\end{abstract}
\section{Introduction}
\label{introduction}
In classical online reinforcement learning (RL), the agent interacts with the environment to collect data and then uses that to derive the policy which maximizes the returns \citep{sutton1998introduction}.
Mainstream RL algorithms rely on fitting optimal value functions \citep{watkins1992q} or calculating policy gradients \citep{sutton1999policy}. 
Both in terms of paradigms and algorithms, reinforcement learning differs significantly from data-driven supervised learning.
To avoid expensive or even risky online interaction and reuse pre-collected datasets, offline RL is proposed.
Compared to classical RL with environment interaction, offline RL shares a paradigm more akin to supervised learning due to learning from datasets. 
This encourages researchers to explore offline algorithms from the supervised perspective.

Decision Transformer (DT) \citep{chen2021decision} maximizes the likelihood of actions conditioned on the historical trajectories (including returns), which essentially converts offline RL to a supervised sequence modeling.
Lots of subsequent works have improved DT from different perspectives, including model architecture \citep{kim2023decision}, online finetuning \citep{zheng2022online}, unsupervised pretraining \citep{xie2023future} and stitching ability \citep{wu2023elastic}.
However, these supervised paradigms seem to overlook the fundamental objective of reinforcement learning that is to maximize returns.
The only naive approach to maximize return is to manually provide an initial return that is as large as possible.
This approach is acceptable in some cases, but it becomes fatal when emphasizing trajectory stitching \citep{brandfonbrener2022does}.
A typical example, also Figure \ref{maze}, is the stitching between a fail trajectory ($R=0$: \textit{starting from the initial point but not reaching the goal}) and a successful trajectory ($R=1$: \textit{reaching the goal but not starting from the initial point}).
Ideal returns should be $0$ at first and then switch to $1$ when stitching to the successful trajectory, which is conflict with the naive max approach that manually sets $1$.

In this work, we propose the concept of max-return sequence modeling, a supervised paradigm that integrates the RL objective.
Max-return sequence modeling not only maximizes the likelihood of actions, but also predicts the maximum in-distribution returns.
Concretely, expectile regression \citep{sobotka2012geoadditive, aigner1976estimation} is adopted to make the predicted returns as close as possible to the maximum returns that are achievable under the current historical trajectory.
When performing inference, the sequence model first predicts the current maximum return and then selects the best action from the offline dataset distribution, guided by this predicted maximized return.
An implementation of max-return sequence modeling is \textbf{Rein}\textbf{\textit{for}}ced Trans\textbf{\textit{for}mer} (\textbf{Rein\textit{for}mer}), representing the sequence model reinforced by the maximum return objective.
When facing trajectory stitching in Figure \ref{maze}, \textbf{Rein\textit{for}mer} tends to predict $0$ at the initial point and predict $1$ when switching to the successful trajectory due to in-distribution max return prediction.

We exhaustively evaluate \textbf{Rein\textit{for}mer} on \texttt{Gym}, \texttt{Maze2d}, \texttt{Kitchen} and \texttt{Antmaze} datasets from the D4RL benchmark \citep{fu2020d4rl}.
\textbf{Rein\textit{for}mer} has achieved performance that is competitive with classical offline RL algorithms.
Compared with state-of-the-art sequence models, \textbf{Rein\textit{for}mer} exhibits promising improvement, especially on datasets where trajectory stitching ability is highly demanded to learn from sub-optimal data.
Our further analysis and ablation study elucidates the role of the return loss and the characteristics of the predicted maximized returns.

\section{Preliminaries}
\subsection{Offline Reinforcement Learning}
Reinforcement Learning (RL) is a framework of sequential decision.
Typically, this problem is formulated by a Markov decision process (MDP) $\mathcal{M}=\{\mathcal{S},\mathcal{A},r,p,d_0,\gamma\}$, 
with state space $\mathcal{S}$, action space $\mathcal{A}$, scalar reward function $r$, transition dynamics $p$, initial state distribution $d_0(s_0)$ and discount factor $\gamma$ \citep{sutton1998introduction}.
The objective of RL is to learn a policy, which defines a distribution over action conditioned on states $\pi\left(a_t|s_t\right)$ at timestep $t$, where $a_t \in \mathcal{A}, s_t \in \mathcal{S}$.
Given this definition, the trajectory $\tau = \left(s_0, a_0, \cdots, s_T, a_T\right)$ generated by the agent's interaction with environment $\mathcal{M}$ 
can be described as a distribution $P_{\pi}\left(\tau\right) = d_0(s_0) \prod_{t=0}^{T} \pi\left(a_t|s_t\right) p\left(s_{t+1}|s_t,a_t\right)$,
where $T$ is the length of the trajectory, and it can be infinite. The goal of RL is to find a policy $\pi$ that maximizes the expectation of the discounted cumulative return under the trajectory distribution $J\left(\pi\right) = \mathbb{E}_{\tau \sim P_{\pi}\left(\tau\right)}\left[ \sum_{t=0}^{T} \gamma^t r(s_t, a_t)\right]$. 

For \textbf{offline} reinforcement learning \citep{levine2020offline}, the interaction with the environment $\mathcal{M}$ is forbidden and only a fixed offline dataset full of transitions is provided $\mathcal{D}=\left\{ \left(s_t,a_t,r_t,s_{t+1},a_{t+1}\right)_{t=1}^{N}\right\}$ where $r_t \dot= r\left(s_t, a_t\right)$.
This setting is more challenging since the agent is unable to explore the environment and collect additional feedback.

\subsection{Sequence Modeling in Reinforcement Learning}\label{2.1}
Compared to online reinforcement learning which interacts with the environment, offline RL is more similar to the data-driven paradigm given the offline dataset $\mathcal{D}$.
As a result, offline RL \citep{chen2021decision} has been formulated as the supervised sequence modeling, different from the classical MDP formulation.
The offline dataset can be denoted as the sequence form rather than transitions $\mathcal{D} = \left\{\left(\cdots,s^{\left(n\right)}_t,a^{\left(n\right)}_t,g^{\left(n\right)}_t\cdots\right)\right\}$.
Here $g^{\left(n\right)}_t$ is the returns-to-go (or simply returns) defined as $g^{\left(n\right)}_t \dot= \sum_{t^\prime=t}^{T} r\left(s_{t^\prime}^{\left(n\right)}, a_{t^\prime}^{\left(n\right)}\right)$ that represents the sum of future rewards from current timestep $t$.
Decision Transformer (DT) \citep{chen2021decision}, following the upside-down RL \citep{srivastava2019training,schmidhuber2019reinforcement}, predicts the actions based on the previous trajectories $\tau$ concatenated with returns-to-go $g_t^{\left(n\right)}$:
\begin{align*}
    \mathcal{L}_{\mathrm{DT}} = \mathbb{E}_{t,n}\bigg[a_t^{\left(n\right)} -  \pi_{\mathrm{DT}}\left(\left<g,s,a\right>_{t-K}^{\left(n\right)};g^{\left(n\right)}_{t}, s^{\left(n\right)}_{t}\right)\bigg]^2,
\end{align*}
where $\mathbb{E}_{t, n}$ is an omission of $\mathbb{E}_{t\in\left[0,T\right], n\in\left[1,N\right]}$.
Besides, $\left<g,s,a\right>_{t-K}^{\left(n\right)}$ denotes the previous $K$ timesteps trajectory supplemented with returns-to-go $g_t^{\left(n\right)}$ and $\left<g,s,a\right>_{t-K}^{\left(n\right)} = \left(g^{\left(n\right)}_{t-K+1}, s^{\left(n\right)}_{t-K+1}, a^{\left(n\right)}_{t-K+1},\cdots,g^{\left(n\right)}_{t-1}, s^{\left(n\right)}_{t-1}, a^{\left(n\right)}_{t-1}\right)$.
In DT, the policy $\pi_{\mathrm{DT}}$ is implemented by a causal transformer, namely the decoder layers. 
For each timestep $t$, three different tokens containing returns-to-go, state and action $g^{\left(n\right)}_{t}, s^{\left(n\right)}_{t}, a^{\left(n\right)}_{t}$ are fed into the model.
And the future action $\hat{a}^{\left(n\right)}_{t}$ is predicted via autoregressive modeling.

To enable online finetuning ability, Online Decision Transformer (ODT) \citep{zheng2022online} stochastic models the action as a Gaussian distribution and trains the model by maximizing action likelihood and another max-entropy term:
\begin{align}
    \mathcal{L}_{\mathrm{ODT}} = &\mathbb{E}_{t,n}\left[-\log \pi_{\mathrm{ODT}}\left(a_t^{\left(n\right)}|\left<g,s,a\right>_{t-K}^{\left(n\right)};g^{\left(n\right)}_{t}, s^{\left(n\right)}_{t}\right)\right. \\\nonumber
    &-\left. \lambda H\left(\pi_{\mathrm{ODT}}\left(\cdot|\left<g,s,a\right>_{t-K}^{\left(n\right)};g^{\left(n\right)}_{t}, s^{\left(n\right)}_{t}\right)\right)\right],
\end{align}
where $\lambda$ is the temperature parameter \citep{haarnoja2018soft} and $\lambda$ will be adaptively updated by another temperature loss $\mathcal{L}_{\lambda}=\lambda\left(H\left(\pi_{\mathrm{ODT}}\left(\cdot|\left<g,s,a\right>_{t-K}^{\left(n\right)};g^{\left(n\right)}_{t}, s^{\left(n\right)}_{t}\right)\right) - \beta\right)$ with $\beta$ is the prefixed value \footnote{Usually, this parameter is the negative value of the action dimension $\beta = -\text{dim}\left(\mathcal{A}\right)$.}.

For the \textbf{Inference} of DT and ODT, one desired performance $\hat{g}_0$ must be specified as returns-to-go at first.
Along with the initial environment state $s_0$, the next action will be generated by the model $a_1=\pi_\mathrm{DT}\left(\hat{g}_0,s_0\right)$ or $\pi_\mathrm{ODT}\left(a_1|\hat{g}_0,s_0\right)$.
Once the action $a_1$ is executed by the environment, the next state $s_1$ and reward $r_1$ are returned.
Then the next returns-to-go should minus the returned reward $\hat{g}_1=\hat{g}_0 - r_1$.
This process is repeated until the episode terminates.
\paragraph{Drawbacks:}
These sequence models mainly focus on maximizing the action likelihood while neglecting the RL objective that maximizes the returns.
Manually setting a large initial $\hat{g}_0$ can be seen as a naive approach to maximize returns. 
Some cases are reasonable, but in the context of trajectory stitching, this method will lead to severe out-of-distribution (OOD) issues. 
It is crucial to consider the goal of maximizing returns within the framework of sequence modeling and to derive the maximum in-distribution returns during the inference phase. 

\begin{figure*}[ht]
\vspace{-6pt}
\begin{center}
\centerline{\includegraphics[width=0.89\textwidth]{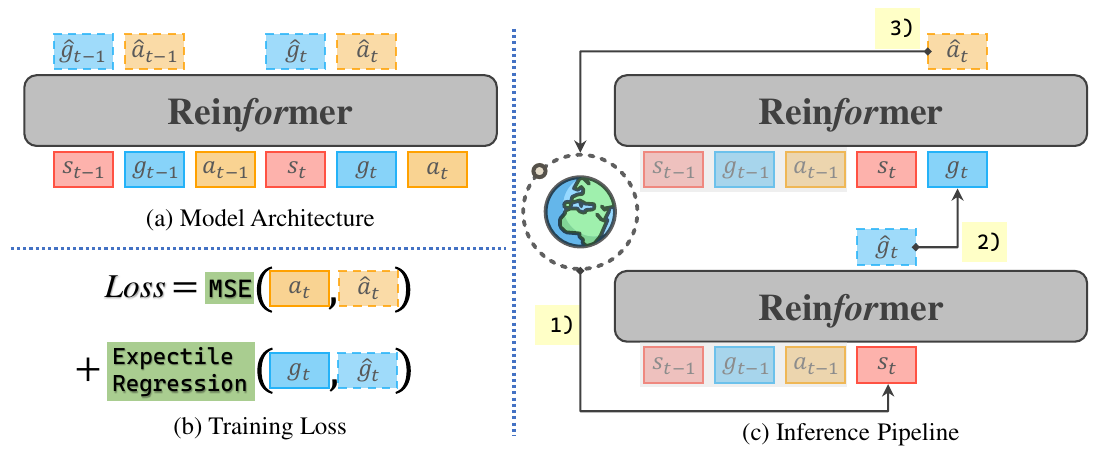}}
\vspace{-12pt}
\caption{The overview of \textbf{ReinFor}ced Trans\textbf{Former} (\textbf{Rein\textit{for}mer})
(a) Model Architecture: The returns-to-go is the second token of \textbf{Rein\textit{for}mer} inputs and the outputs contain returns and actions.
(b) Train Loss: As a max-return sequence model, \textbf{Rein\textit{for}mer} not only maximizes the action likelihood but also maximizes returns by expectile regression.
(c) Inference Pipeline: When inference, \textbf{Rein\textit{for}mer} first 1) gets state from the environment to predict the in-distribution maximum return. Then 2) predicted in-distribution max return is concatenated with state to predict the optimal action. Finally, 3) the environment executes the predicted action to return the next state.} \label{overview}
\end{center}
\vspace{-24pt}
\end{figure*}

\section{\textbf{Rein\textit{for}mer}: Reinforced Transformer}
In this section, we start with a simple maze example to illustrate why classical sequence models and the naive max-return approach are unlikely to solve the trajectory stitching problem. 
Further, we introduce the concept of max-return sequence modeling and theoretically demonstrate that this paradigm can derive maximum returns without suffering from OOD issues. 
Finally, we present the implementation details of our sequence model \textbf{ReinFor}ced Trans\textbf{Former} (\textbf{Rein\textit{for}mer}) from three aspects: model architecture, the loss function during training and the inference pipeline.

\subsection{Trajectory Stitching Example}\label{3.1}
In the offline RL literature, trajectory stitching receives lots of attention.
Ideally, the offline agent should take suboptimal trajectories that overlap and stitch them into the optimal trajectories \citep{kostrikov2021offline,liu2023ceil}.
It has been proved both theoretically \citep{brandfonbrener2022does} and empirically \citep{kumar2022offline} that the return-conditioned sequence modeling lacks stitching ability.
We utilize the following example to thoroughly describe this.

\paragraph{Example} The Figure \ref{maze} depicts a toy maze, where $s_0$ is the starting point, $s_G$ is the final goal state with
\begin{wrapfigure}[9]{r}{0.49\linewidth}
\vspace{-0.15in}
  \centering
  \includegraphics[width=0.92\linewidth]{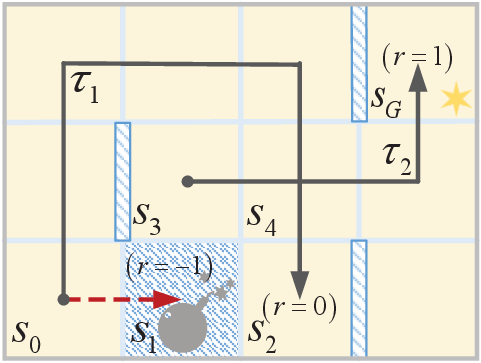} 
\vspace{-0.15in}
  \caption{A maze example for trajectory stitching analysis.}
  \label{maze}
\end{wrapfigure}
reward $r=1$, $s_1$ is a boom with $r=-1$ and other states are all $r=0$.
The offline dataset contains two trajectories one trajectory $\tau_1$ starts from the initial point $s_0$ but doesn't reach the goal while another $\tau_2$ reaches the goal $s_G$ but doesn't start from $s_0$. 
Trajectory stitching expects the offline agent can follow the first half of $\tau_1$ (from starting point $s_0$ to $s_4$) and then take the second half of $\tau_2$ (from $s_4$ to the goal $s_G$) to reach the goal.
We first explain why the basic sequence model, such as DT, might fail.

For DT, if we set initial returns-to-go as $\hat{g}_0=0$ at the starting point, the offline agent will smoothly reach the intersection state $s_4$. 
However, since returns-to-go is still zero $\hat{g}_4=0$, DT will reach the state $s_2$ rather then $s_G$.
Only when $\hat{g}_4=1$, DT is possible to follow $\tau_2$. 
But $\hat{g}_4=1$ is \textit{impossible} to achieve given $\hat{g}_0=0$.
If we apply the naive max approach and set the initial $\hat{g}_0=1$, the agent might directly walk towards the boom $s_1\left(r=-1\right)$ because $\hat{g}_0=1$ is the OOD returns-to-go for the starting point\footnote{Actually, DT or ODT use this naive approach for Antmaze dataset, called the ``delayed'' mode in code implementation.}.

If the sequence model is endowed with capability to maximize the returns like RL, Let's see what might happen.
At the starting point $s_0$, only $\tau_1$ is contained in dataset so the model will predict $\hat{g}_0=0$.
When offline agent comes to the intersection $s_4$, the latter segments of both trajectories are available.
If the sequence model is able to maximize return, then $\tau_2$ is more likely to be selected since the return $R=1$ is larger.
This inspires us to bring the capability of maximizing returns back into sequence modeling.
\subsection{Max-Return Sequence Modeling}
The key objective of RL is to obtain the optimal action for the current state through maximizing the returns. 
We aim to equip supervised sequence modeling with additional maximizing return objective.
And during inference, the sequence model can select optimal action conditioned on the in-distribution maximized returns.
We introduce the expectile regression as returns-to-go loss to achieve this.

Expectile regression \citep{newey1987asymmetric} is well studied in applied statistics and econometrics and has been introduced into offline RL recently \citep{kostrikov2021offline, wu2023elastic}.
Specifically, the returns-to-go loss based on the expectile regression is as follows:
\begin{align}
    \mathcal{L}^{m}_{\text{g}}=\mathbb{E}_{t, n}\left[\left|m-\mathbbm{1} \left( \Delta g < 0\right) \right|\Delta g^2\right],
\end{align}
here $\Delta g = g_t^{(n)} - \textcolor{blue}{\hat{g}_t^{(n)}}$ and $\textcolor{blue}{\hat{g}_t^{(n)}} = \pi_{\theta}\left(\left<s,g,a\right>^{\left(n\right)}_{t-K};s^{\left(n\right)}_t\right)$.
It is noteworthy that the relative positions of these tokens are different from DT, where returns-to-go $g_t^{\left(n\right)}$ is placed after the state token $s_t^{\left(n\right)}$ and before the action $a_t^{\left(n\right)}$.
Here $m \in \left(0,1\right)$ is the hyperparameter of expectile regression. 
When $m=0.5$, expectile regression degenerates into standard regression, also MSE loss.
\begin{wrapfigure}[10]{l}{0.49\linewidth}
\vspace{-0.15in}
  \centering
  \includegraphics[width=0.95\linewidth]{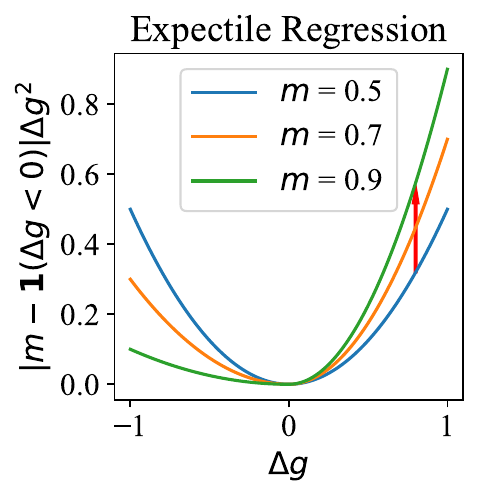} 
\vspace{-0.25in}
  \caption{illustration of weight.}
  \label{ER}
\end{wrapfigure}
But when $m>0.5$, this asymmetric loss will give more weights to the $g_t^{\left(n\right)}$ larger than $\textcolor{blue}{\hat{g}_t^{\left(n\right)}}$, which aligns with the asymmetric curves in Figure \ref{ER}.
Besides, The red arrow shows the weight increases as the $m$ becomes larger. 
In other words, the predicted returns-to-go $\textcolor{blue}{\hat{g}_t^{\left(n\right)}}$ will approach larger $g_t^{\left(n\right)}$.

To unveil what the returns-to-go loss function has learned and offer a formal elucidation of its role, we introduce the following theorem along with its proof:
\begin{theorem}\label{thm}
We first define $\mathbf{S}^{\left(n\right)}_t \dot= \left[\left<s,g,a\right>^{\left(n\right)}_{t-K};s^{\left(n\right)}_t\right]$.
For $m\in\left(0,1\right)$, denote $\mathbf{g}^m\left(\mathbf{S}^{\left(n\right)}_t\right) = \pi_{\theta}^*\left(\mathbf{S}^{\left(n\right)}_t\right)$, where $\pi_{\theta}^* = 
\arg \min \mathcal{L}_{\text{g}}^m$, then we have 
\begin{align*}
    \lim_{m\rightarrow 1} \mathbf{g}^m\left(\mathbf{S}^{\left(n\right)}_t\right) = g_{\text{max}},
\end{align*}
where $g_{\text{max}} = \max_{\mathbf{a} \sim \mathcal{D}} g\left(\mathbf{S}^{\left(n\right)}_t, \mathbf{a}\right)$ denotes the maximum returns-to-go with actions from offline dataset.
\end{theorem}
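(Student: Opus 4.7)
The plan is to reduce the theorem to a pointwise statement about expectile regression and then invoke the classical characterization of expectiles as a continuous family indexed by $m$ whose upper limit is the essential supremum of the target distribution.

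First, I would fix an arbitrary context $\mathbf{S}^{(n)}_t = \mathbf{S}$ and observe that, since the loss $\mathcal{L}^m_{\text{g}}$ is an expectation that can be written by conditioning on $\mathbf{S}$, the optimal unconstrained predictor $\pi_\theta^*(\mathbf{S})$ must minimize the \emph{conditional} objective
\begin{align*}
L_m(\hat{g}) \;=\; \mathbb{E}_{g\sim P(\cdot\mid \mathbf{S})}\bigl[\,|m-\mathbbm{1}(g<\hat{g})|(g-\hat{g})^2\,\bigr],
\end{align*}
where $P(\cdot\mid \mathbf{S})$ is the distribution of returns-to-go $g(\mathbf{S},\mathbf{a})$ induced by $\mathbf{a}\sim\mathcal{D}$ conditioned on the prefix $\mathbf{S}$. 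Write $\mathbf{g}^m(\mathbf{S})$ for the unique minimizer (uniqueness follows from strict convexity of $L_m$ in $\hat{g}$ for $m\in(0,1)$).

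Next I would differentiate $L_m$ under the integral sign; the indicator contributes nothing at the kink, so the first-order condition is
\begin{align*}
m\!\int_{g\ge \hat{g}_m}\!(g-\hat{g}_m)\,dP(g\mid\mathbf{S}) \;=\; (1-m)\!\int_{g<\hat{g}_m}\!(\hat{g}_m-g)\,dP(g\mid\mathbf{S}),
\end{align*}
with $\hat{g}_m := \mathbf{g}^m(\mathbf{S})$. From this equation I would derive two simple monotonicity/boundedness facts: (i) $\hat{g}_m$ is non-decreasing in $m$ (implicit differentiation, or a direct comparison argument), and (ii) $\hat{g}_m \le g_{\max}$, since otherwise the left side is zero while the right side is strictly positive. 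Consequently $g^\star := \lim_{m\to 1}\hat{g}_m$ exists and satisfies $g^\star \le g_{\max}$.

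The crux is then a short contradiction argument to rule out $g^\star < g_{\max}$. If $g^\star < g_{\max}$, then by definition of $g_{\max}$ as the maximum over actions in the support of $\mathcal{D}$, the conditional distribution $P(\cdot\mid\mathbf{S})$ places strictly positive mass on the interval $(g^\star, g_{\max}]$, so the integral $\int_{g\ge \hat{g}_m}(g-\hat{g}_m)\,dP(g\mid\mathbf{S})$ stays bounded below by a positive constant as $m\to 1$. Meanwhile, the right-hand side of the FOC is at most $(1-m)\,\mathbb{E}[|g|+|g_{\max}|]$, which tends to $0$. Multiplying through by $m$ on the left (which tends to $1$) yields the contradiction $0 < 0$, forcing $g^\star = g_{\max}$. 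I expect this last step — legitimising the uniform lower bound and handling the case where $P(\cdot\mid\mathbf{S})$ might have an atom exactly at $g_{\max}$ versus being continuous near it — to be the only delicate point, but in both situations the same positive-mass argument goes through; no additional regularity beyond a finite second moment of $g$ (needed so that $L_m$ is well defined) is required.
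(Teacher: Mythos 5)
Your proposal is correct, and it follows the same broad skeleton as the paper's proof (monotonicity of the expectile in $m$, boundedness $\mathbf{g}^m \le g_{\text{max}}$ by contradiction, then passage to the limit), but the way you execute each step is genuinely different and, at the crucial point, more complete. The paper cites the monotonicity of expectile regression as a known fact and proves boundedness by comparing the value of the loss at a hypothetical $\mathbf{g}^{m_3} > g_{\text{max}}$ against the value at $\max_{t,n} g_t^{(n)}$; you instead reduce to the conditional problem given $\mathbf{S}$ and derive both facts from the first-order condition
\begin{align*}
m\!\int_{g\ge \hat{g}_m}\!\left(g-\hat{g}_m\right)dP\left(g\mid\mathbf{S}\right) \;=\; \left(1-m\right)\!\int_{g<\hat{g}_m}\!\left(\hat{g}_m-g\right)dP\left(g\mid\mathbf{S}\right),
\end{align*}
which is a standard but self-contained route. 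The substantive difference is the last step: the paper concludes by saying the limit ``follows from the properties of bounded monotonically non-decreasing functions,'' which by itself only gives \emph{existence} of $\lim_{m\to 1}\mathbf{g}^m \le g_{\text{max}}$, not that the limit equals $g_{\text{max}}$; your contradiction argument (if the limit were strictly below $g_{\text{max}}$, the left side of the first-order condition stays bounded away from zero while the right side is $O(1-m)$) supplies exactly the missing identification of the limit, and it works whether $P(\cdot\mid\mathbf{S})$ is the finite empirical distribution of the dataset (the paper's implicit setting, where $g_{\text{max}}$ carries an atom) or merely has $g_{\text{max}}$ as its essential supremum. The only implicit assumption you share with the paper is that the model class is rich enough that minimizing $\mathcal{L}^m_{\text{g}}$ yields the pointwise conditional expectile for each $\mathbf{S}$; stating that reduction explicitly, as you do, is a feature rather than a flaw.
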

\begin{proof}
First, according to the monotonicity of expectile regression, we have $\mathbf{g}^{m_1} \leq \mathbf{g}^{m_2}$ when $0<m_1<m_2<1$.

Secondly, $\forall m \in \left(0,1\right), \mathbf{g}^{m} \leq g_{\text{max}}$ and we can use contradiction to prove it.
Assume one $m_3$ satisfies $\mathbf{g}^{m_3} > g_{\text{max}}$, then all the returns-to-go from offline dataset will $g_t^{(n)} < \mathbf{g}^{m_3}$.
The returns-to-go loss can be simplified given the same weight $1-m_3$:
\begin{align*}
    \mathcal{L}^{m_3}_{\textbf{g}}&=\mathbb{E}_{t, n}\left[\left(1-m_3 \right)\left(g_t^{(n)} - \mathbf{g}^{m_3}\right)^2 \right] \\
    &> \mathbb{E}_{t, n}\left[\left(1-m_3 \right)\left(g_t^{(n)} - \max_{t,n} [g_t^{(n)}]\right)^2 \right].
\end{align*}
This inequality holds because $g_t^{(n)} \leq \max_{t,n} [g_t^{(n)}] < \mathbf{g}^{m_3}$.
But this inequality is conflict with the fact that $\mathbf{g}^{m_3}$ is obtained by minimizing the returns-to-go loss.
Therefore, the assumption is not valid and $\mathbf{g}^{m} \leq g_{\text{max}}$ is true.

Finally, this limit follows from the properties of bounded monotonically non-decreasing functions thus affirming the validity of the theorem.
\end{proof}

In one word, Theorem \ref{thm} indicates the loss $\mathcal{L}_{\text{g}}^m$ will make the model predict the maximum returns-to-go when $m\rightarrow 1$, which is similar to the maximizing returns objective in RL.
When inference, the model will generate the action conditioned on this predicted maximum returns-to-go.
Furthermore, the second step in our proof indicates that this predicted returns do not suffer from OOD issues.

\subsection{Implementation Details}
Now, we will focus on the specific implementation of \textbf{Rein\textit{for}mer}, describing the model input and output, training, and inference procedures.
Figure \ref{overview} is an overview.
\paragraph{Model Architecture} 
To accommodate the max-return sequence modeling paradigm, which predicts the maximum return and utilizes it as a condition to guide the generation of optimal actions, we have positioned returns between state and action. 
In detail, the input token sequence of \textbf{Rein\textit{for}mer} and corresponding output tokens are summarized as follows:
\begin{align*}
    \textbf{Input: \ }&\left<\ \cdots,s^{\left(n\right)}_t, g^{\left(n\right)}_t, a^{\left(n\right)}_t\right> \\
    \textbf{Output: \ }&\quad \quad \left<\ \hat{g}^{\left(n\right)}_t, \hat{a}^{\left(n\right)}_t,\Box\ \right>
\end{align*}
here $g^{\left(n\right)}_t \dot= \sum_{t^\prime=t}^{T} r\left(s_{t^\prime}^{\left(n\right)}, a_{t^\prime}^{\left(n\right)}\right)$ is the returns-to-go and we will abbreviated it as \textit{returns} in the absence of ambiguity.
When predicting the $\hat{g}_t^{\left(n\right)}$, the model takes the current state $s_t^{\left(n\right)}$ and previous $K$ timesteps tokens $\left<s,g,a\right>_{t-K}^{\left(n\right)}$ into consideration.
For the sake of simplicity, $\mathbf{S}^{\left(n\right)}_{t-K}$ denotes the input $\left[\left<s,g,a\right>^{\left(n\right)}_{t-K};s^{\left(n\right)}_t\right]$.
While the action prediction $\hat{a}_t$ is based on $\left(\mathbf{S}^{\left(n\right)}_{t-K}, \mathbf{G}^{\left(n\right)}_{t-K}\right) = \left[\left<s,g,a\right>^{\left(n\right)}_{t-K};s^{\left(n\right)}_t,g^{\left(n\right)}_t\right]$.
The $\Box$ represents this predicted token neither participates in training nor inference so we ignore it.

At the timestep $t$, different tokens are embedded by different linear layers and fed into the transformers \citep{vaswani2017attention} together. 
The output returns-to-go $\hat{g}^{\left(n\right)}_t$ is processed by a linear layer.
For action $\hat{a}^{\left(n\right)}_t$, we adopt the Gaussian distribution and use the mean of this distribution for inference.

\paragraph{Training Loss} Since the model predicts two parts, $\hat{g}_t$ and $\hat{a}_t$, the loss function is composed of returns-to-go loss and action loss.
For the action loss, we adopt the loss function of ODT and simultaneously adjust the order of tokens:

\begin{align}\label{action_loss}
    \mathcal{L}_{\text{a}} &= \mathbb{E}_{t,n}\left[-\log \pi_{\theta}\left(a_t^{\left(n\right)}|\mathbf{S}^{\left(n\right)}_{t-K}, \mathbf{G}^{\left(n\right)}_{t-K}\right) \right. \nonumber \\
    &\quad\quad -\left. \lambda H\left(\pi_{\theta}\left(\cdot|\mathbf{S}^{\left(n\right)}_{t-K}, \mathbf{G}^{\left(n\right)}_{t-K}\right)\right)\right]
\end{align}

The returns-to-go loss is the expectile regression with the parameter $m$:
\begin{align}\label{return_loss}
    \mathcal{L}^{m}_{\text{g}}=\mathbb{E}_{t, n}\left[\left|m-\mathbbm{1} \left( \Delta g < 0\right) \right|\Delta g^2\right]&,\\ \nonumber
    \text{with \ } \Delta g = g_t^{(n)} - &\pi_{\theta}\left(\mathbf{S}^{\left(n\right)}_{t-K}\right).
\end{align}

Two loss functions have the same weight so the total loss is $\mathcal{L}_{\text{a}} + \mathcal{L}_{\text{g}}$.
Algorithm \ref{training} summarizes the training process.

\begin{algorithm}[t]
\caption{Training}
\begin{algorithmic}[1]\label{training}
\STATE {\bfseries Input:} offline dataset $\mathcal{D}$, sequence model $\pi_{\theta}$
\FOR{ sample $ \left<\ \cdots,s_{t}, g_{t}, a_{t}\ \right>$ from $\mathcal{D}$}
    \STATE $\hat{g}_t, \hat{a}_t=\pi_{\theta}\left(\cdots,s_{t}, g_{t}, a_{t}\right)$
    \STATE Calculate action loss $\mathcal{L}_{\text{a}}$ by Equation (\ref{action_loss})
    \STATE Calculate returns-to-go loss $\mathcal{L}_{\text{g}}^m$ by Equation (\ref{return_loss})
    \STATE Take gradient descent step on $\nabla_{\theta}\left(\mathcal{L}_{\text{a}} + \mathcal{L}_{\text{g}}^m\right)$
\ENDFOR
\end{algorithmic}
\end{algorithm}

\paragraph{Inference Pipeline}
For each timestep $t$, the action is the last token, which means the predicted action is affected by state from the environment and the returns-to-go.
The returns of the trajectories output by the sequence model exhibit a positive correlation with the initial conditioned returns-to-go \citep{chen2021decision,zheng2022online}. 
That is, within a certain range, higher initial returns-to-go typically lead to better actions.
In classical Q-learning \citep{mnih2015human}, the optimal value function $Q^*$ can derive the optimal action $a^*$ given the current state. 
In the context of sequence modeling, we also assume that the maximum returns-to-go are required to output the optimal actions. 
The inference pipeline of the \textbf{Rein\textit{for}mer} is summarized as follows:
\begin{align}
    \overset{\text{\textcolor{blue}{Env}}}{\longmapsto} s_0 \xrightarrow{\pi_{\theta}} g_0 \xrightarrow{\pi_{\theta}} a_0 \xrightarrow{\text{\textcolor{blue}{Env}}} s_1 \xrightarrow{\pi_{\theta}} g_1 \xrightarrow{\pi_{\theta}} a_1 \rightarrow \cdots
\end{align}
Specially, the environment initializes the state $s_0$ and then the sequence model $\pi_{\theta}$ predicts the maximum returns-to-go $g_0$ given current state $s_0$.
Concatenating $g_0$ with $s_0$, $\pi_{\theta}$ can output the optimal action $a_0$.
Then the environment transitions to the next state $s_1$ and the reward $r_1$.
It should be noted that this reward $r_1$ will \textbf{not} participate in the inference.
Repeat the above steps until the trajectory comes to an end.
This pipeline has been summarized in Algorithm \ref{inference}.

\begin{algorithm}[t]
    \caption{Inference}
\begin{algorithmic}[1]\label{inference}
    \STATE {\bfseries Input:} sequence model $\pi_{\theta}$, environment $\text{Env}$
    \STATE $s_0 = \text{Env}.reset(\ )$ and $t=0$
    \REPEAT
    \STATE Predict maximum returns $\textcolor{blue}{\hat{g}_t}=\pi_{\theta}\left(\cdots,s_{t}, \Box, \Box\ \right)$
    \STATE Predict optimal action $\hat{a}_t=\pi_{\theta}\left(\cdots,s_{t}, \textcolor{blue}{\hat{g}_{t}}, \Box \right)$
    \STATE $s_{t+1},r_t= \text{Env}.step(\hat{a}_t)$ and $t=t+1$
    \UNTIL{done}
\end{algorithmic}
\end{algorithm}

\subsection{Comparison with EDT} Elastic Decision Transformer (EDT) \citep{wu2023elastic} explicitly considers trajectory stitching, so we briefly introduce EDT and then compare it with our \textbf{Rein\textit{for}mer}.
EDT also introduces the same returns-to-go loss (\ref{return_loss}). 
During the inference phase, EDT uses this loss to estimate the maximum returns-to-go $\hat{g}_{\text{max}}(K)$ achievable under different historical lengths $K$. 
EDT then selects the maximum $\hat{g}^*_{\text{max}} = \max_K \hat{g}_{\text{max}}(K)$ and finally determine the action based on the expert action inference \citep{lee2022multi} along with the EDT model.
The inference of EDT dynamically adjusts historical trajectories, which preserves longer historical lengths when previous trajectories are optimal and shortens them when they are suboptimal.

Similarly, EDT requires initial returns as input. 
Therefore, we also categorize EDT as a kind of basic sequence model that does not explicitly consider maximizing return. 
Although EDT is equipped with the same returns-to-go loss, it still relies on the naive max-return. 
For the \texttt{Antmaze} dataset that requires trajectory stitching, EDT does not provide experimental results. 
Our reproduced results indicate that EDT perform poorly on \texttt{Antmaze}.
\begin{table*}[htbp]
  \centering
  \caption{The normalized last score on D4RL \texttt{Gym-v2}, \texttt{Maze2d-v1} and \texttt{Kitchen-v0} dataset. We report the mean and standard deviation of normalized score for five seeds. For each seed, the stats is calculated by 10 evaluation trajectories for \texttt{Gym} while 100 for \texttt{Maze2d} and \texttt{Kitchen}. The best result is \textbf{bold} and the \textcolor{blue}{blue} result means the best result among sequence modeling.}
  \small
    \begin{tabular}{c|l|rrr|rrrrr@{\hspace{-0.1pt}}l}
    \toprule
    \multicolumn{2}{c|}{\multirow{2}[4]{*}{Dataset}} & \multicolumn{3}{c|}{Reinforcement Learning} & \multicolumn{6}{c}{Sequence Modeling} \\
\cmidrule{3-11}    \multicolumn{2}{c|}{} & \multicolumn{1}{c}{BC} & \multicolumn{1}{c}{CQL} & \multicolumn{1}{c|}{IQL} & \multicolumn{1}{c}{DT} & \multicolumn{1}{c}{ODT} & \multicolumn{1}{c}{EDT} & \multicolumn{1}{c}{QDT} & \multicolumn{2}{c}{\textbf{Rein\textit{for}mer}} \\
    \midrule
    \multirow{10}[4]{*}{\begin{sideways}\texttt{Gym}\end{sideways}} & \texttt{halfcheetah-medium} & 42.6  & 44.0  & \textbf{47.4}  & 42.6  & 42.7  & 42.5  & 39.3  & \textcolor{blue}{42.94}&\textcolor{blue}{$\pm$0.39} \\
          & \texttt{halfcheetah-medium-replay} & 36.6  & \textbf{45.5}  & 44.2  & 36.6  & \textcolor{blue}{40.0}  & 37.8  &   35.6 & 39.01 & $\pm$0.91 \\
          & \texttt{halfcheetah-medium-expert} & 55.2  & 91.6  & 86.7  & 86.8  &       &    91.2   &   & \textcolor{blue}{\textbf{92.04}} &\textcolor{blue}{\textbf{$\pm$0.32 }}\\
          & \texttt{hopper-medium} & 52.9  & 58.5  & 66.3  & 67.6  & 67.0  & 63.5  & 66.5  & \textcolor{blue}{\textbf{81.60}}&\textcolor{blue}{\textbf{$\pm$3.32 }}\\
          & \texttt{hopper-medium-replay} & 18.1  & \textbf{95.0}  & 94.7  & 82.7  & 86.6  & \textcolor{blue}{89.0}  & 52.1  & 83.31 & $\pm$3.47 \\
          & \texttt{hopper-medium-expert} & 52.5  & 105.4  & 91.5  & 107.6  &       & 107.8  &   & \textcolor{blue}{\textbf{107.82}} & \textcolor{blue}{\textbf{$\pm$2.14}} \\
          & \texttt{walker2d-medium} & 75.3  & 72.5  & 78.3  & 74.0  & 72.2  & 72.8  & 67.1  &\textcolor{blue}{\textbf{ 80.52}} & \textcolor{blue}{\textbf{$\pm$2.74} }\\
          & \texttt{walker2d-medium-replay} & 26.0  & \textbf{77.2}  & 73.9  & 66.6  & 68.9  & \textcolor{blue}{74.8}  & 58.2  & 72.89 & $\pm$5.06 \\
          & \texttt{walker2d-medium-expert} & 107.5  & 108.8  & \textbf{109.6}  & 108.1  &       & 107.9  &   & \textcolor{blue}{109.35} & \textcolor{blue}{$\pm$0.32} \\
\cmidrule{2-11}          & \textit{Total} & \textit{466.7}  & \textit{698.5}  & \textit{692.6}  & \textit{672.6}  &       & \textit{687.3}  &   &\textit{\textcolor{blue}{\textbf{709.46}}}  \\
    \midrule
    \multirow{4}[4]{*}{\begin{sideways}\texttt{Maze2d}\end{sideways}} & \texttt{maze2d-umaze} & 0.4   & -8.9  & 42.1  & 18.1  &       &    35.8   &    \textcolor{blue}{\textbf{57.3}}   & 57.15 & $\pm$4.27 \\
          & \texttt{maze2d-medium} & 0.8   & \textbf{86.1}  & 34.9  & 31.7  &       & 18.3  &    13.3   & \textcolor{blue}{85.62} & \textcolor{blue}{$\pm$30.89} \\
          & \texttt{maze2d-large} & 2.3   & 23.8  & \textbf{61.7}  & 35.7  &       & 26.8  &   31.0    & \textcolor{blue}{47.35} &\textcolor{blue}{ $\pm$6.89 }\\
\cmidrule{2-11}          & \textit{Total} & \textit{3.5}   & \textit{101.0}  & \textit{138.7}  & \textit{85.5}  &       & \textit{85.9}  &    \textit{101.6}   & \textit{\textcolor{blue}{\textbf{190.12}}} &  \\
    \midrule
    \multirow{4}[4]{*}{\begin{sideways}\texttt{Kitchen}\end{sideways}} & \texttt{kitchen-complete} & \textbf{65.0}  & 43.8  & 62.5  &       &       & 40.8  &       & \textcolor{blue}{59.85}& \textcolor{blue}{$\pm$0.52} \\
          & \texttt{kitchen-partial} & 38.0  & 49.8  & 46.3  &       &       & 10.0  &  & \textcolor{blue}{\textbf{73.10}} & \textcolor{blue}{\textbf{$\pm$2.01}} \\
          & \texttt{kitchen-mixed} & 51.5  & 51.0  & 51.0  &       &       & 9.8   & & \textcolor{blue}{\textbf{65.50}} & \textcolor{blue}{\textbf{$\pm$6.22}} \\
\cmidrule{2-11}          & \textit{Total} & \textit{154.5}  & \textit{144.6}  & \textit{159.8}  &       &       & \textit{60.5}  &       & \textit{\textcolor{blue}{\textbf{198.45}}} & \\
    \bottomrule
    \end{tabular}%
  \label{tab1}%
  \vspace{-14pt}
\end{table*}%
\section{Related Work}
\paragraph{Offline Reinforcement Learning} As the cradle of supervised sequence modeling, offline RL \citep{levine2020offline} breaks free from the traditional paradigm of online interaction. 
Classic off-policy actor-critic algorithms \citep{degris2012off, konda1999actor} encounter out-of-distribution (OOD) issues, where the value function tends to overestimate the OOD state-action pairs \citep{fujimoto2019off,liu2024beyond}. 
Mainstream offline algorithms can be categorized into two classes.
One is policy constraint, which constrains the learned policy stay close to the behavior policy based on different ``distance'' such as batch constrained \citep{fujimoto2019off}, KL divergence \citep{wu2019behavior,liu2022dara}, MMD distance \citep{kumar2019stabilizing} and MSE constraint \citep{fujimoto2021minimalist}.
The other is value regularization, which regularizes the value function to assign low values on OOD state-action pairs \citep{liu2024design,kostrikov2021offline1,bai2022pessimistic}.
Some algorithms take a different approach, understanding and solving offline RL problems from the perspective of on-policy learning. 
R-BVE \citep{gulcehre2021regularized} and Onestep RL \citep{brandfonbrener2021offline} both transform off-policy style offline algorithms (such as CRR \citep{wang2020critic}, BCQ \citep{fujimoto2019off}, BRAC \citep{wu2019behavior}) into on-policy style.
Besides, BPPO \citep{zhuang2023behavior} finds the online algorithm PPO \citep{schulman2017proximal} can directly solve the offline RL due to its inherent conservatism.
In contrast, Decision Transformer (DT) \citep{chen2021decision} directly maximizes the action likelihood, which opens up a new paradigm called supervised sequence modeling. 

\paragraph{Sequence Modeling in Reinforcement Learning} 
Before Decision Transformer (DT), upside-down reinforcement learning \citep{srivastava2019training,schmidhuber2019reinforcement} has already begun exploring RL solutions using supervised learning methods.
DT \citep{chen2021decision} incorporates return as part of the sequence to predict the optimal action.
This paradigm breaks away from the classic RL paradigm, circumventing OOD problems directly. 
Inspired by DT, RL is investigated from a supervised learning perspective, including network architecture \citep{kim2023decision, david2022decision}, unsupervised pretraining \citep{xie2023future} and large capacity model \citep{lee2022multi}.
While other works improve DT using RL components such as online finetuning \citep{zheng2022online}, trajectory stitching \citep{wu2023elastic} and dynamics programming \citep{yamagata2023q}.
However, no sequence model incorporates the RL objective that maximizes returns to enhance the model \citep{liu2024didi}.

\section{Experiments}
We conduct extensive experiments on \texttt{Gym}, \texttt{Maze2d}, \texttt{Kitchen} and \texttt{Antmaze} datasets from D4RL benchmark \citep{fu2020d4rl}, aiming to answer the following questions:
\begin{itemize}
    \item As a max-return sequence modeling method, how does \textbf{Rein\textit{for}mer} compare with other state-of-the-art sequence models that do not explicitly consider maximizing returns? Can \textbf{Rein\textit{for}mer} narrow the performance gap or even surpass classical offline RL algorithms? 
    \item How does the predicted maximized returns-to-go and the parameter $m$ of returns loss affect the \textbf{Rein\textit{for}mer} performance? Additionally, what is the characteristic of the predicted maximized returns-to-go?
\end{itemize}
\subsection{Results on D4RL Benchmark}
We conduct a comprehensive evaluation of \textbf{Rein\textit{for}mer}, covering not only \texttt{Gym} dataset where performance approaches saturation but also more challenging datasets like \texttt{Maze2d}, \texttt{Kitchen} and \texttt{Antmaze}.
Among these, \texttt{Antmaze} is characterized by sparse rewards and strongly emphasizes the stitching ability.

\paragraph{Baselines}
We compare our method with both representative reinforcement learning and state-or-the-art sequence modeling methods.
Some results are reproduced using the official code and please refer to Appendix \ref{a1} for detail.
\begin{itemize}
    \item \textbf{Reinforcement learning} includes Behavior Cloning (BC) \citep{pomerleau1988alvinn}, Conservative Q-Learning (CQL) \citep{kumar2020conservative} and Implicit Q-Learning (IQL) \citep{kostrikov2021offline}. Strictly, BC is an imitation learning algorithm. We categorize BC here just to emphasize that it is not a sequence modeling algorithm. 
    \item \textbf{Sequence modeling} includes Decision Transformer (DT) \citep{chen2021decision}, Online Decision Transformer (ODT) \citep{zheng2022online}, Elastic Decision Transformer (EDT) \citep{wu2023elastic} and Q-learning Decision Transformer (QDT) \citep{yamagata2023q}.
\end{itemize}

\paragraph{Results on Dense Rewards Datasets} The results of \textbf{Rein\textit{for}mer} on three datasets are presented in Table \ref{tab1}.
Evaluations of existing sequence modeling algorithms are notably insufficient, primarily focusing on \texttt{Gym} datasets while overlooking more challenging \texttt{Maze2d} and \texttt{Kitchen}.
All the sequence model are able to achieve performance comparable to RL algorithms on \texttt{Gym} dataset. 
Furthermore, \textbf{Rein\textit{for}mer} also demonstrates superior performance on \texttt{Maze2d} (\textcolor{red}{$+37.07\%$}) and \texttt{Kitchen}(\textcolor{red}{$+28.45\%$}) compared to the strongest baseline.
Especially on \texttt{Kitchen}, classical offline algorithms do not significantly outperform BC, indicating that reward function is not fully utilized.

\paragraph{Results on Sparse Rewards Dataset} The \texttt{Antmaze} dataset features sparse rewards, with $r=1$ when reaching the goal. 
Both \texttt{medium-diverse} and \texttt{medium-play} does not contains complete trajectories from the starting point to the goal, which necessitates the algorithm to stitch failed trajectories to accomplish the goal. 
Despite the claims of trajectory stitching ability, our reproduced results in Table \ref{tab2} indicate that EDT \citep{wu2023elastic} performs poorly.

\begin{table}[htbp]
  \centering
  \vspace{-12pt}
  \caption{The normalized best score on D4RL \texttt{Antmaze-v2} dataset. We report the mean and standard deviation of normalized score for five seeds. For each seed, the stats is calculated by 100 evaluation trajectories. The best result is \textbf{bold} and the \textcolor{blue}{blue} result means the best result among sequence modeling.}
  \resizebox{\linewidth}{!}{
    \begin{tabular}{r|rr|rrrr}
    \toprule
    \multicolumn{1}{c|}{\multirow{2}[4]{*}{\texttt{Antmaze-v2}}} & \multicolumn{2}{c|}{RL} & \multicolumn{4}{c}{Sequence Modeling} \\
\cmidrule{2-7}          & \multicolumn{1}{c}{BC} & \multicolumn{1}{c|}{IQL} & \multicolumn{1}{c}{DT} & \multicolumn{1}{c}{EDT} & \multicolumn{1}{c}{ODT} & \multicolumn{1}{c}{\textbf{Rein\textit{for}mer}} \\
    \midrule
    \multicolumn{1}{l|}{\texttt{umaze}} & 68.5  & 84.0  & 64.5  & 67.8  & 53.1  & \textcolor{blue}{\textbf{84.4$\pm$2.7}} \\
    \multicolumn{1}{l|}{\texttt{umaze-diverse}} & 64.8  & \textbf{79.5}  & 60.5  & 58.3  & 50.2  & \textcolor{blue}{65.8$\pm$4.1} \\
    \multicolumn{1}{l|}{\texttt{medium-play}} & 4.5   & \textbf{78.5}  & 0.8   & 0.0   & 0.0   & \textcolor{blue}{13.2$\pm$6.1} \\
    \multicolumn{1}{l|}{\texttt{medium-diverse}} & 4.8   & \textbf{83.5}  & 0.5   & 0.0   & 0.0   & \textcolor{blue}{10.6$\pm$6.9} \\
\cmidrule{1-7}    \textit{Total} & \textit{142.6} & \textit{325.5} & \textit{126.3} &\textit{126.1} & \textit{103.3} & \textcolor{blue}{\textit{174.0}} \\
    \bottomrule
    \end{tabular}}%
    \vspace{-8pt}
  \label{tab2}%
\end{table}%

One observation in Table \ref{tab2} is that the previous sequence models even underperform BC, which highlights the absence of trajectory stitching. 
\textbf{Rein\textit{for}mer} exhibits a significant improvement compared to other sequence models, especially on \texttt{medium-diverse} and \texttt{medium-play}. 
However, compared to RL algorithm IQL \citep{kostrikov2021offline}, the performance gap remains quite considerable.

\paragraph{Summary and Discussion} Thanks to the max-return sequence modeling paradigm, \textbf{Rein\textit{for}mer} has emerged as the current state-of-the-art in sequence modeling. 
On \texttt{Antmaze} dataset that particularly demands extreme trajectory stitching, offline RL retains a significant advantage. 
However, on other datasets, our method demonstrates a noticeable improvement.
Figure \ref{fig:compare} describes the improvement probability \citep{agarwal2021deep} of \textbf{Rein\textit{for}mer}.
\begin{figure}[H]
    \centering
    \includegraphics[width=0.85\linewidth]{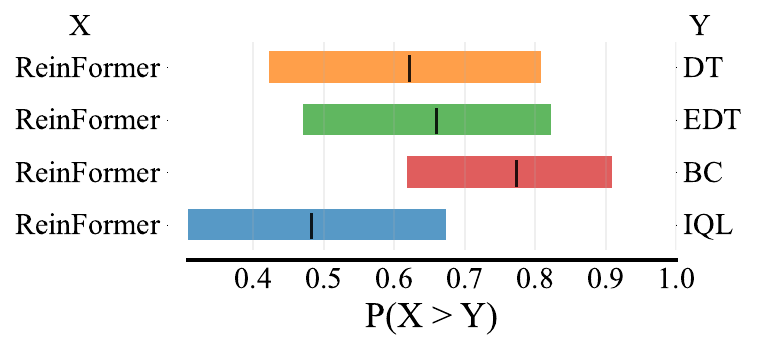}
    \vspace{-12pt}
    \caption{The probability of improvement of \textbf{Rein\textit{for}mer} compared with other methods using rliable \citep{agarwal2021deep}. The larger the probability is, the better our method performs.}
    \label{fig:compare}
\end{figure}

\subsection{Analysis of Returns-to-go loss}
We will delve into the critical return loss in max-return sequence modeling. 
We aim to understand the impact of this module on the \textbf{Rein\textit{for}mer} performance and explore the characteristics of the predicted maximized returns.
\vspace{-0.4cm}
\begin{figure}[H]
  \centering
  \hspace{-6pt}
  \subfigure{
    \includegraphics[width=0.49\linewidth]{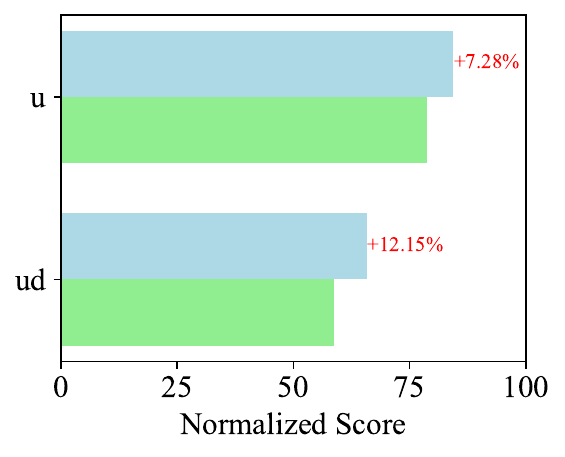}
    \label{fig:subfig1}
  }
  \hspace{-12pt}
  \subfigure{
    \includegraphics[width=0.49\linewidth]{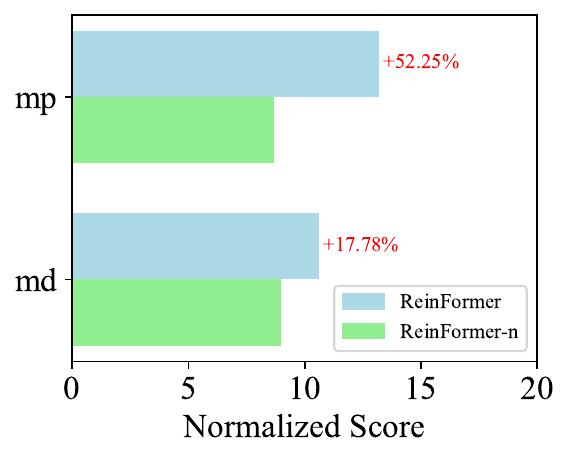}
    \label{fig:subfig2}
  }
  \vspace{-16pt}
  \caption{The comparison between different inference approaches on \texttt{Antmaze} dataset: \textbf{Rein\textit{for}mer} with predicted maximized returns versus \textbf{Rein\textit{for}mer}-n with naively maximized returns. Abbreviations: u $\rightarrow$ \texttt{umaze}, ud $\rightarrow$ \texttt{umaze-diverse}, mp $\rightarrow$ \texttt{medium-play}, md $\rightarrow$ \texttt{medium-diverse}.}
  \label{fig5}
\end{figure}
\vspace{-0.4cm}
\paragraph{Comparison on Maximizing Return Approach} The key advantage of max-return sequence modeling lies in predicting the maximized in-distribution return during the inference. 
The compared baseline \textbf{Rein\textit{for}mer}-n is based on the naive max approach that the initial returns is the max return in offline dataset.
This baseline also includes the returns-to-go loss during training. 
As for the comparison results without return loss, please refer to the results in Table \ref{tab1} or \ref{tab2}. 
We demonstrate the performance on different \texttt{Antmaze} datasets in Figure \ref{fig5}. 
The improvement observed on \texttt{umaze} and \texttt{umaze-diverse}, which contains the entire trajectory from the starting point to the goal, is \textcolor{red}{$19.43\%$}. 
The improvement on \texttt{medium-play} and \texttt{medium-diverse}, where emphasizes trajectory stitching, is \textcolor{red}{$70.03\%$}.

\paragraph{Ablation Study on $m$} Next, we analyze the impact of the hyper-parameter $m$ in return loss. 
According to the theorem \ref{thm}, when $m \rightarrow 1$, the learned returns will approach to the maximum return within the offline distribution.
Furthermore, due to the fact that higher in-distribution return leads to better action, we can conclude that the performance will improve as $m$ approaches $1$.
The experimental results in Figure \ref{fig6} indeed align with the above theoretical analysis.
Within a certain range, large $m$ generally leads to better training process and higher performance. 
However, when $m=0.999$ is excessively large, it may result in a performance decline. 
This could be attributed to the model overfitting to some extreme large returns in offline dataset.
\vspace{-0.4cm}
\begin{figure}[H]
  \centering
  \hspace{-6pt}
  \subfigure[\texttt{maze2d-umaze}]{
    \includegraphics[width=0.49\linewidth]{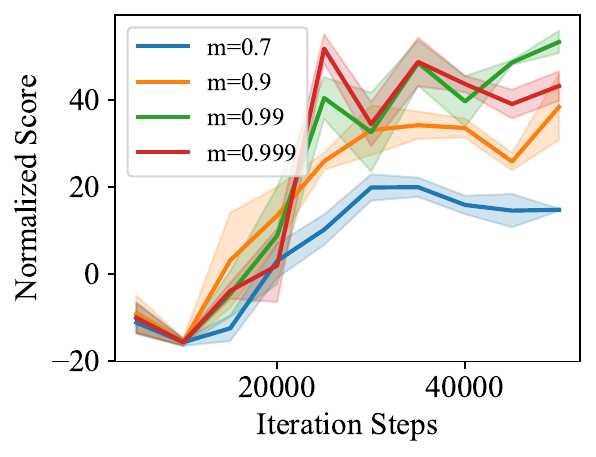}
    \label{fig:subfig3}
  }
  \hspace{-12pt}
  \subfigure[\texttt{Antmaze}]{
    \includegraphics[width=0.49\linewidth]{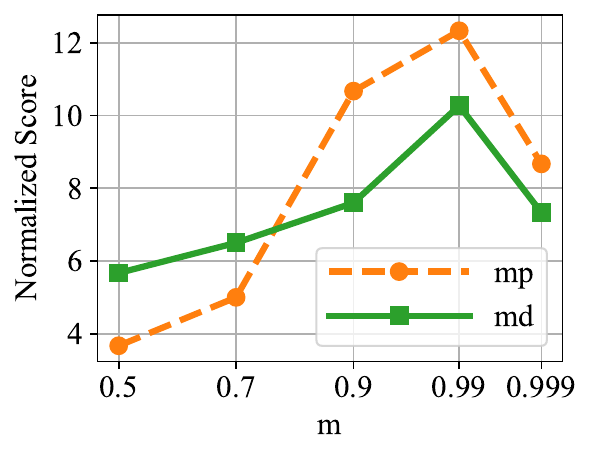}
    \label{fig:subfig4}
  }
  \vspace{-12pt}
  \caption{The performance given different hyper-parameter $m$. \ref{fig:subfig1} illustrates the training curves with varying $m$ on \texttt{maze2d-umaze} while \ref{fig:subfig2} illustrates the trend of best results as $m$ varies on \texttt{Antmaze-medium-play} (mp) and \texttt{medium-devise} (md).}
  \label{fig6}
\end{figure}
\vspace{-0.4cm}

\paragraph{Characteristic of the Predicted Maximized Returns}
We conduct experiments on \texttt{Halfcheetah-medium} and \texttt{Kitchen-complete} to analyze the characteristics of predicted maximized returns-to-go. 
In Figure \ref{fig:subfig6}, the dashed line represents the true returns-to-go. 
Original \texttt{Kitchen} environment is a staged sparse reward environment so the dashed line exhibits a step-wise decline. 
Trajectory return can only be one of $[0,1,2,3,4]$ but the dataset in D4RL \citep{fu2020d4rl} contains trajectories with returns ranging from $0$ to $400$. 
Therefore the predicted return, the solid line, is a continuous curve.
The may results from some densification operation when constructing the dataset.

In environment with non-negative rewards, the true returns-to-go should exhibit a monotonically decreasing trend and eventually reach 0.
In \ref{fig:subfig5}, the predicted returns with different $m$ all exhibits clear decreasing trend and only the excessively large $m=0.999$ is unable to converge to 0.
For \texttt{Kitchen} in Figure \ref{fig:subfig6}, the solid curve $g=4$ represents the predicted return of the trajectory that completes the task while $g=2$ is the trajectory that completes the half.
The curve $g=4$ demonstrates a good monotonically decreasing property.
But the descent rate of $g=2$ significantly slowed down in the latter part and ultimately did not converge to zero.
This high level return indicates that the model is still attempting to complete the task, although it ultimately fail.
\vspace{-0.8cm}
\begin{figure}[H]
  \centering
  \hspace{-6pt}
  \subfigure[\texttt{Halfcheetah-medium}]{
    \includegraphics[width=0.59\linewidth]{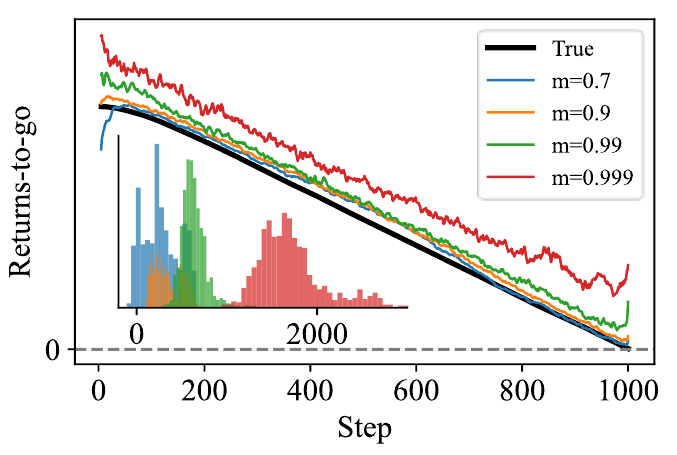}
    \label{fig:subfig5}
  }
  \hspace{-14pt}
  \subfigure[\texttt{Kitchen-complete}]{
    \includegraphics[width=0.40\linewidth]{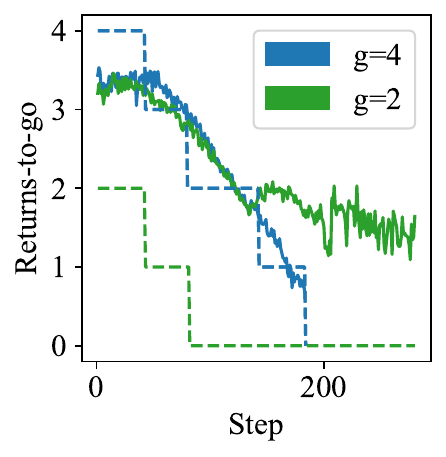}
    \label{fig:subfig6}
  }
  \vspace{-12pt}
  \caption{The predicted maximized returns-to-go by \textbf{Rein\textit{for}mer} and the true returns-to-go obtained through interaction with the environment. In \ref{fig:subfig3}, the black curve is the mean of true returns-to-go of different $m$ and the other curves are the predicted maximized returns-to-go. The bottom-left histogram illustrates the distribution of the differences between prediction and grand truth. In \ref{fig:subfig4}, the dashed line represents the true returns-to-go from the environment, while the solid line represents the predicted returns-to-go. It's worth noting that the trajectory return of \texttt{Kitchen} environment can only take one of $[0,1,2,3,4]$. Here, we present a trajectory that successfully completes the task ($g=4$) along with another one that completes half of the task ($g=2$).}
  \label{fig7}
\end{figure}
\vspace{-0.4cm}

From Figure \ref{fig:subfig5}, we observe that except for $m=0.999$, the predicted returns can well match the actual returns.
We plotted a frequency distribution histogram of the difference between the predicted return and the actual return. 
An observation is $m=0.999$ suffers from overestimation although this does not harm the performance.
In Figure \ref{fig:subfig6}, the case $g=4$ exhibits a relatively high degree of matching. 
Because case $g=2$ only completes the initial phase of the task, the predicted returns remains consistently high without any decrease or convergence. 
After translation, the true return still match the predicted maximized returns.

\section{Conclusion, Discussion and Future work}
In this work, we propose the paradigm of max-return sequence modeling which considers the RL objective that maximizes returns. 
Both theoretical analysis and experiments indicate the effectiveness of our proposed model \textbf{Rein\textit{for}mer}. 
Despite our promising improvement on trajectory stitching, sequence modeling still falls short compared to classical RL. 
In future work, we will focus on identifying and bridging the gap between classical RL algorithms and sequence modeling.
Furthermore, we aim to investigate the scenarios where classical RL excels and where sequence models can truly shine, providing a more nuanced understanding of their respective strengths and applications.

\section*{Acknowledgement}
This work was supported by the National Science and Technology Innovation 2030 - Major Project (Grant No. 2022ZD0208800), and NSFC General Program (Grant No. 62176215). 

\section*{Impact Statement}
Our research introduces a novel approach to offline reinforcement learning (RL) through the integration of max-return sequence modeling, which we term \textbf{Rein\textit{for}mer}. This work aims to address a critical gap in the current paradigm of sequence modeling by explicitly incorporating the core RL objective of maximizing returns. By doing so, we enhance the trajectory stitching capability, which is crucial for learning from sub-optimal data and improving the overall performance of RL algorithms.

The broader impact of our work extends to various domains where RL is applied, such as robotics, autonomous systems, and decision-making processes. The \textbf{Rein\textit{for}mer} algorithm the potential to lead to more efficient and effective learning algorithms, which could accelerate the development and deployment of autonomous technologies. This, in turn, could have significant societal consequences, including advancements in healthcare, transportation, and environmental management, where the ability to make optimal decisions is paramount.

From an ethical standpoint, our work emphasizes the importance of aligning RL algorithms with their intended objectives, ensuring that the pursuit of maximizing returns does not lead to unintended consequences. This is particularly relevant as AI systems become more integrated into critical systems where ethical considerations are paramount.

As we move forward, it is imperative to consider the societal implications of deploying RL algorithms that are more capable of learning from data. This includes ensuring that these systems are transparent, fair, and accountable, and that they do not perpetuate biases or lead to negative outcomes for vulnerable populations.

In conclusion, while our work contributes to the advancement of the field of Machine Learning, particularly in the area of RL, we recognize the need for ongoing discussions around the ethical use and societal impact of these technologies. We encourage the community to engage in these conversations and to develop guidelines that will ensure the responsible application of our findings.
\bibliography{example_paper}
\bibliographystyle{icml2024}

\newpage
\appendix
\onecolumn
\section{Baseline Results}\label{a1}
Existing sequence modeling algorithms have only been evaluated on the \texttt{Gym-v2} dataset, neglecting other more complex and challenging datasets (\texttt{Maze2d-v1}, \texttt{Kitchen-v0} and \texttt{Antmaze-v2}). 
To provide a more comprehensive comparison and evaluation of \textbf{Rein\textit{for}mer}, we have reproduced several baseline methods using their offical code. 
Subsequently, we will clarify which results stem from the original works or third-party reproductions, and which are reproduced by ourselves.
Table \ref{tab1} contains \textbf{last} results on dataset \texttt{Gym-v2}, \texttt{Maze2d-v1} and \texttt{Kitchen-v0}.
\begin{itemize}
    \item \texttt{Gym-v2}: All the results of Reinforcement Learning baselines including BC, CQL and IQL come from the IQL original paper \citep{kostrikov2021offline}. 
    The results of Decision Transformer (DT) are from the DT original paper \citep{chen2021decision}. For ODT, EDT and QDT, the results are sourced from their respective original papers \citep{zheng2022online, wu2023elastic, yamagata2023q} while the results on dataset \texttt{medium-expert} are all absent. We reproduce the EDT which considers trajectory stitching using its official code \url{https://github.com/kristery/Elastic-DT}.
    \item \texttt{Maze2d-v1}: The results of BC, CQL, IQL and DT are from the open source library Clean Offline Reinforcement Learning (CORL) \url{https://github.com/tinkoff-ai/CORL}. CORL provides high-quality and easy-to-follow single-file implementations of state-of-the-art offline reinforcement learning algorithms and each implementation is backed by a research-friendly codebase \citep{tarasov2022corl}. For QDT, the results come from its original paper \citep{yamagata2023q} without discounted returns-to-go introduced. The result of EDT is still reproduced.
    \item \texttt{Kitchen-v0}: The results of BC, CQL and IQL come from the IQL original paper \citep{kostrikov2021offline} and we reproduce the EDT.
\end{itemize}
Table \ref{tab2} contains the \textbf{best} results on dataset \texttt{Antmaze-v2}. The reason for selecting the best result rather than the last one is due to the instability in the training of \textbf{Rein\textit{for}mer}. Additionally, our algorithm's performance on \texttt{Antmaze-large-diverse} and \texttt{Antmaze-large-play} is poor, with the agent completing the tasks only 1 or 2 times out of 100 evaluations. It is challenging to ascertain whether these results are attributable to the randomness induced by hyperparameter tuning or seed selection.
So we only report the results on other four datasets.
\begin{itemize}
    \item \texttt{Antmaze-v2}: The results of BC, IQL and DT are extracted from CORL library \cite{tarasov2022corl}. The other results are reproduced by ourselves using its their respective official code, including ODT \url{https://github.com/facebookresearch/online-dt} and EDT \url{https://github.com/kristery/Elastic-DT}.
\end{itemize}
During the process of reproducing EDT, we observe that EDT is highly sensitive to the initial returns-to-go. 
This indicates that, despite incorporating a loss function aimed at maximizing returns, EDT still operates with naively maximizing returns.
Additionally, we are not fully acquainted with the optimal selection of initial returns-to-go on \texttt{Maze2d-v1}, \texttt{Kitchen-v0} and \texttt{Antmaze-v2}. 
To address this, we have uniformly adopted the maximum return observed in the dataset. 
However, we are uncertain whether this choice may introduce any adverse effects on the experimental outcomes of EDT.
This indirectly highlights one of the advantages of \textbf{Rein\textit{for}mer}, which is its ability to adaptively select the returns-to-go without the need for manual design.

\section{Experiment Details and Hyperparameters}
Our \textbf{Rein\textit{for}mer} implementation draws inspiration from and references the following four repositories:
\begin{itemize}
    \item online-dt: \url{https://github.com/facebookresearch/online-dt};
    \item Elatic-DT: \url{https://github.com/kristery/Elastic-DT};
    \item min-decision-transformer: \url{https://github.com/nikhilbarhate99/min-decision-transformer}; 
    \item decision-transformer: \url{https://github.com/kzl/decision-transformer}.
\end{itemize}
The state tokens, return tokens and action tokens are first processed by different linear layers. 
Then these tokens are fed into the decoder layer to obtain the embedding.
Here the decoder layer is a lightweight implementation from ``min-decision-transformer''.
The context length for the decoder layer is denoted as $K$. 
We use the LAMB \citep{you2019large} optimizer with to optimize the model with action loss and returns-to-go loss. The hyperparameter of returns loss is denoted as $m$. 
\subsection{Hyperparameter $m$}
The hyperparameter $m$ is crucially related to the returns-to-go loss and is one of our primary focuses for tuning. 
We explore values within the range of $m = [0.7, 0.9, 0.99, 0.999]$. 
When $m=0.5$, the expectile loss function will degenerate into MSE loss, which means the model is unable to output a maximized returns-to-go.
So we do not take $m=0.5$ into consideration.
We observe that performance is generally lower at $m=0.7$ compared to others.
Only \texttt{hopper-medium} and \texttt{hopper-medium-replay} adopt the parameter $m=0.999$ while $m=0.9$ and $m=0.99$ are generally better than $m=0.999$ on other datasets. The detailed hyperparameter selection of $m$ is summarized in the following table:

\begin{table}[h]
\centering
    \caption{Hyperparameters $m$ of returns loss on different datasets. }
    \vspace{6pt}
    \begin{tabular}{l|l||l|l}
    \toprule
    \textbf{Dataset}                   & \textbf{$m$} & \texttt{maze2d-umaze}           & 0.99 \\ \cline{1-2}
    \texttt{halfcheetah-medium}        & 0.9          & \texttt{maze2d-medium}          & 0.99 \\
    \texttt{halfcheetah-medium-replay} & 0.9          & \texttt{maze2d-large}           & 0.99 \\ \cline{3-4}
    \texttt{halfcheetah-medium-expert} & 0.9          & \texttt{kitchen-complete}       & 0.99 \\
    \texttt{hopper-medium}             & 0.999        & \texttt{kitchen-partial}        & 0.9  \\
    \texttt{hopper-medium-replay}      & 0.999        & \texttt{kitchen-mixed}          & 0.9  \\ \cline{3-4}
    \texttt{hopper-medium-expert}      & 0.9          & \texttt{Antmaze-umaze}          & 0.9  \\
    \texttt{walker2d-medium}           & 0.9          & \texttt{Antmaze-umaze-diverse}  & 0.99 \\
    \texttt{walker2d-medium-replay}    & 0.99         & \texttt{Antmaze-medium-play}    & 0.99 \\
    \texttt{walker2d-medium-expert}    & 0.99         & \texttt{Antmaze-medium-diverse} & 0.99 \\
    \bottomrule
    \end{tabular}
\end{table}

\subsection{Context Length $K$}

The context length $K$ is another key hyperparameter for sequence modeling, and we conduct a parameter search across the values $K=[2, 5, 10, 20]$. 
The maximum value is $20$ because the default context length for DT \citep{chen2021decision} is $20$. 
The minimum is $2$, which corresponds to the shortest sequence length (setting $K=1$ would no longer constitute sequence modeling). 
Overall, we found that $K=20$ leads to more stable learning and better performance on high quality dataset (such as \texttt{Gym-medium-expert} and \texttt{Kitchen}). 
Conversely, a smaller context length is preferable on low quality dataset (such as \texttt{Gym-medium/medium-replay} and \texttt{Antmaze}).
The parameter $K$ has been summarized as follows:

\begin{table}[h]
\centering    
    \caption{Context length $K$ on different datasets. }
    \vspace{6pt}
    \begin{tabular}{l|r||l|r}
    \toprule
    \textbf{Dataset}                   & \textbf{$K$} & \texttt{maze2d-umaze}           & 20 \\ \cline{1-2}
    \texttt{halfcheetah-medium}        & 5          & \texttt{maze2d-medium}          & 10 \\
    \texttt{halfcheetah-medium-replay} & 5          & \texttt{maze2d-large}           & 10 \\ \cline{3-4}
    \texttt{halfcheetah-medium-expert} & 20          & \texttt{kitchen-complete}       & 20 \\
    \texttt{hopper-medium}             & 5        & \texttt{kitchen-partial}        & 20  \\
    \texttt{hopper-medium-replay}      & 5        & \texttt{kitchen-mixed}          & 20  \\ \cline{3-4}
    \texttt{hopper-medium-expert}      & 20          & \texttt{Antmaze-umaze}          & 2  \\
    \texttt{walker2d-medium}           & 5          & \texttt{Antmaze-umaze-diverse}  & 2 \\
    \texttt{walker2d-medium-replay}    & 2         & \texttt{Antmaze-medium-play}    & 3 \\
    \texttt{walker2d-medium-expert}    & 20         & \texttt{Antmaze-medium-diverse} & 2 \\
    \bottomrule
    \end{tabular}

\end{table}

\subsection{Training Steps and Learning Rate}

The default number of training steps is 50000, with a learning rate of 0.0001.
With these default settings, if the training score continues to rise, we would consider increasing the number of training steps or doubling the learning rate.
For some datasets, 50000 steps may cause overfitting and less training steps are better.
The training steps are presented in Table \ref{training_step} and learning rate is summarized in Table \ref{lr}.
We evaluate the policy every 5000 steps to obtain a normalized score.
For each seed, this normalized score is calculated as the average returns of 100 trajectories except for 10 trajectories on \texttt{Gym-v2} datasets.

\begin{table}[]
    \centering
    \caption{The training steps on different datasets.}
    \vspace{6pt}
    \begin{tabular}{l|r||l|r}
    \toprule
    \textbf{Dataset}                  & \textbf{Training Steps}          & \texttt{maze2d-umaze}           & 50000  \\ \cline{1-2}
    \texttt{halfcheetah-medium}        & 50000                   & \texttt{maze2d-medium}          & 35000  \\
    \texttt{halfcheetah-medium-replay} & 50000                   & \texttt{maze2d-large}           & 45000  \\ \cline{3-4}
    \texttt{halfcheetah-medium-expert} & 50000                   & \texttt{kitchen-complete}       & 100000 \\
    \texttt{hopper-medium}             & 30000                   & \texttt{kitchen-partial}        & 90000  \\
    \texttt{hopper-medium-replay}      & 300000                  & \texttt{kitchen-mixed}          & 50000  \\ \cline{3-4}
    \texttt{hopper-medium-expert}      & 100000                  & \texttt{Antmaze-umaze}          & 50000  \\
    \texttt{walker2d-medium}           & 15000                   & \texttt{Antmaze-umaze-diverse}  & 50000  \\
    \texttt{walker2d-medium-replay}    & 80000                   & \texttt{Antmaze-medium-play}    & 100000 \\
    \texttt{walker2d-medium-expert}    & 50000                   & \texttt{Antmaze-medium-diverse} & 100000 \\
    \bottomrule
    \end{tabular}
    
    \label{training_step}
\end{table}

\begin{table}[]
\centering
\caption{The learning rate on different datasets.}
\vspace{6pt}
    \begin{tabular}{r|l}
    \toprule
    \textbf{Learning Rate} & \textbf{Dataset}                                                     \\ \hline
    0.0008                            & \texttt{Antmaze-medium-play};                                 \\
    \multirow{3}{*}{0.0004}           & 1) \texttt{walker2d-medium-replay};                               \\
                                      & 2) \texttt{maze2d-medium}; 3) \texttt{maze2d-large};                 \\
                                      & 4) \texttt{Antmaze-medium-diverse};                              \\
    0.0001                            & Other datasets.     \\                            
    \bottomrule
    \end{tabular}
    \label{lr}
\end{table}

\subsection{Network Architecture}
The default values for the number of decoder layer, attention heads and hidden dimension are 4, 8 and 256, respectively. 
These parameters are usually fixed. 
When we observe an initial increase followed by a decrease in the training curve, we infer overfitting and reduce the number of layers. 
On the contrary, if the training curve consistently rises without a clear convergence trend, we would attempt to increase the number of layers. 
As for the number of attention heads, only \texttt{Antmaze-medium-diverse} is 4.

\begin{table}[h]
\centering
\caption{The number of encoder layers and attention heads on different datasets.}
\vspace{6pt}
\resizebox{\linewidth}{!}{
    \begin{tabular}{l|r|l}
    \toprule
    \textbf{Hyperparameters}                  & \textbf{Values} & \textbf{Dataset}                                                          \\
    \hline
    \multirow{4}{*}{Encoder Layers}  & 3                          & \texttt{maze2d-umaze}; \texttt{antmaze-umaze}; \texttt{antmaze-umaze-diverse};              \\
                                     & 4                          & \texttt{Gym-v2};                                                          \\
                                     & 5                          & \texttt{maze2d-medium}; \texttt{maze2d-large}; \texttt{Kitchen-v0}; \texttt{Antmaze-medium-diverse}; \\
                                     & 6                          & \texttt{Antmaze-medium-play}.                                             \\
    \hline
    \multirow{2}{*}{Attention Heads} & 4                          & \texttt{Antmaze-medium-diverse};                                          \\
                                     & 8                          & Other datasets.                                                   \\
    \bottomrule
    \end{tabular}}
    
\end{table}

\section{Training Curves}
We exhibit the training curves on five seeds.
The black line represents the mean of these five seeds and the red shaded area represents the variance.

\subsection{\texttt{Gym-v2}}
The training curves on nine datasets from \texttt{Gym-v2} are plotted in Figure \ref{gym}.
Among these nine datasets, \texttt{hopper-medium-replay} exhibits severe training fluctuations, while \texttt{walker2d-medium-replay} shows slight fluctuation. 
The remaining datasets are notably stable, yielding satisfactory results without deliberate hyperparameter tuning.

\begin{figure}[h]
\begin{center}
    \includegraphics[width=0.3\textwidth]{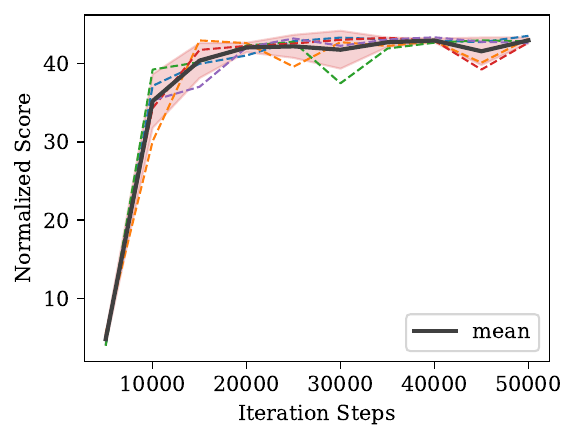}\label{fig:sub1}\hspace{0.3cm}
    \includegraphics[width=0.3\textwidth]{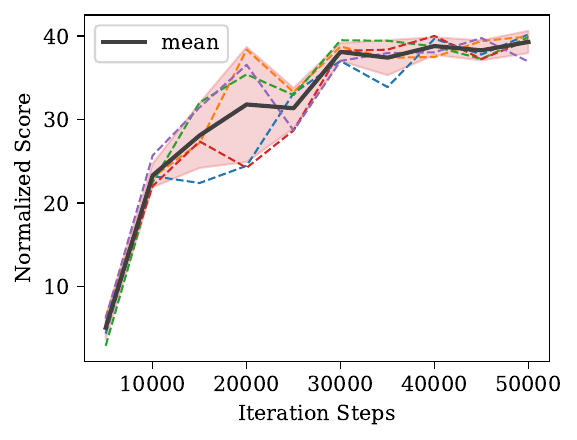}\label{fig:sub2}\hspace{0.3cm} 
    \includegraphics[width=0.3\textwidth]{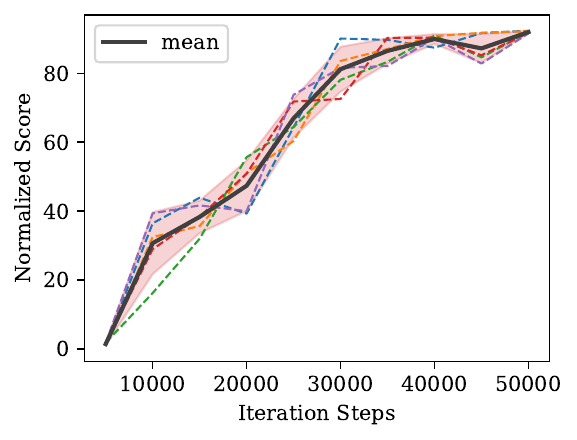}\label{fig:sub3}\\
\end{center}
\vspace{-0.3cm}
    \text{~~~~~~~~~~~~~(a)~\texttt{halfcheetah-medium}~~~~~(b)~\texttt{halfcheetah-medium-replay}~~~~(c)~\texttt{halfcheetah-medium-expert}}
\begin{center}
    \includegraphics[width=0.3\textwidth]{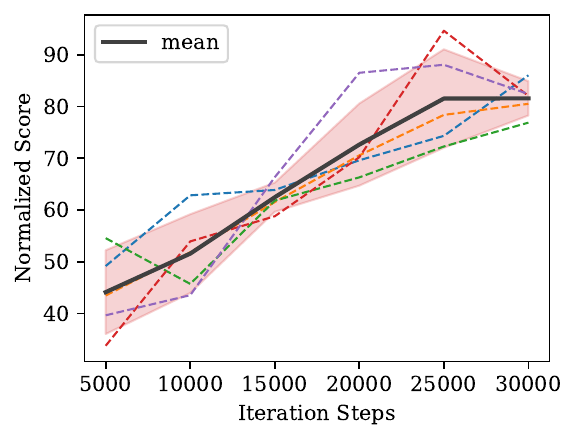}\hspace{0.3cm}\label{fig:sub4}
    \includegraphics[width=0.3\textwidth]{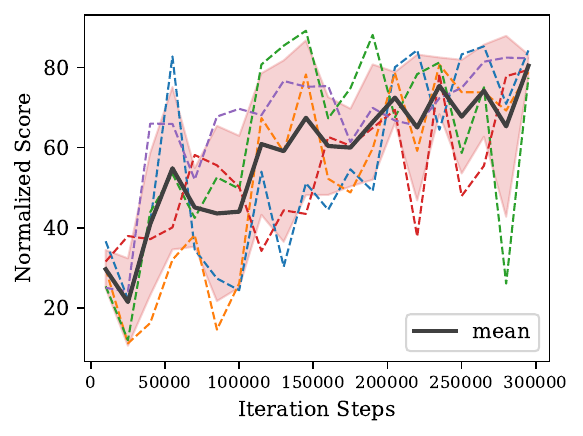}\hspace{0.3cm}\label{fig:sub5}
    \includegraphics[width=0.3\textwidth]{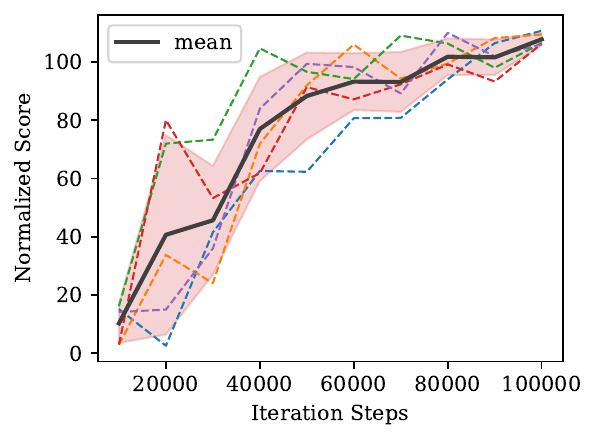}\label{fig:sub6}\\
\end{center}
\vspace{-0.3cm}
    \text{~~~~~~~~~~~~~~~~~~(d)~\texttt{hopper-medium}~~~~~~~~~~~~~~~~~~(e)~\texttt{hopper-medium-replay}~~~~~~~~~~~(f)~\texttt{hopper-medium-expert}}
\begin{center}
    \includegraphics[width=0.3\textwidth]{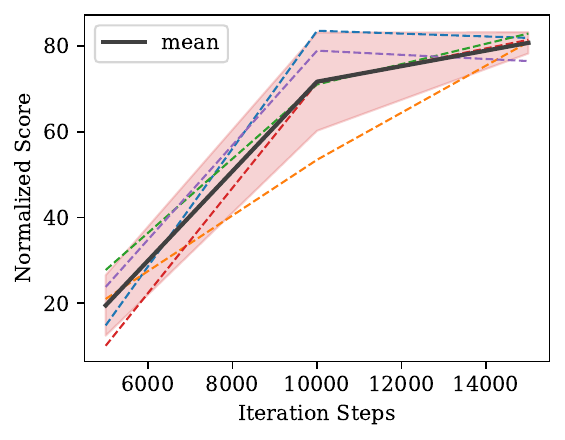}\hspace{0.3cm}\label{fig:sub7}
    \includegraphics[width=0.3\textwidth]{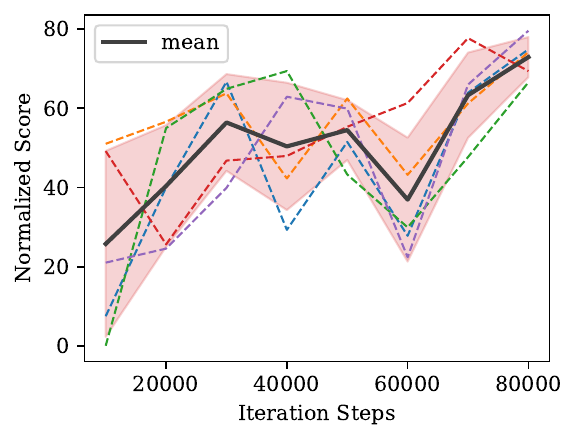}\hspace{0.3cm}\label{fig:sub8}
    \includegraphics[width=0.3\textwidth]{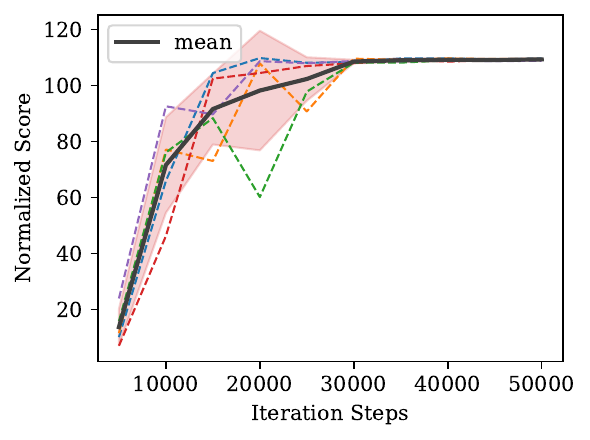}\label{fig:sub9}\\
\vspace{-0.1cm}
    \text{~~~~~~~~~~~~~~~~~(g)~\texttt{walker2d-medium}~~~~~~~~~~~~(h)~\texttt{walker2d-medium-replay}~~~~~~(i)~\texttt{walker2d-medium-expert}}
\end{center}
  \caption{The training curves on \texttt{Gym-v2}.}
  \vspace{-6pt}
  \label{gym}
\end{figure}

\subsection{\texttt{Maze2d-v1}}
The training curves on three datasets from \texttt{Maze2d-v1} are plotted in Figure \ref{maze2d}.
The training curve of \texttt{maze2d-umaze} is relatively stable. 
The variance on dataset \texttt{maze2d-medium} is very high and the training process also suffers from severe training fluctuations.
Sometimes, the score can even approach to 125.
Dataset \texttt{maze2d-medium} also fluctuates a little.

\begin{figure}[!b]
\vspace{-6pt}
\begin{center}
    \includegraphics[width=0.3\textwidth]{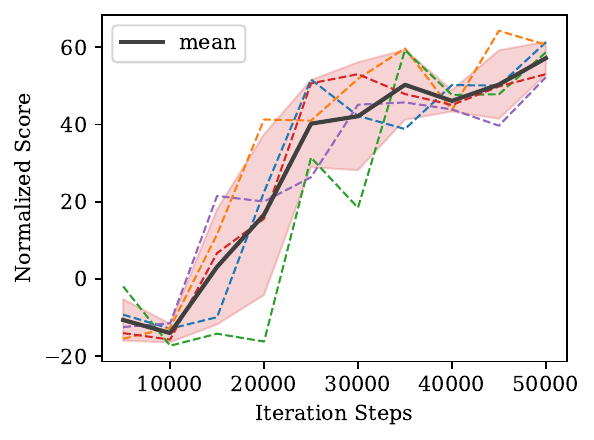}\label{fig:sub10}\hspace{0.3cm}
    \includegraphics[width=0.3\textwidth]{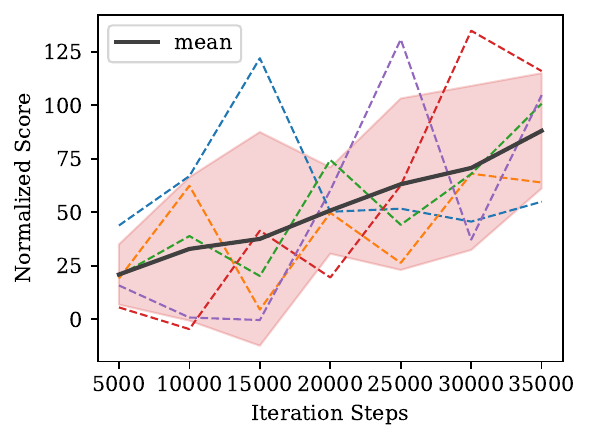}\label{fig:sub11}\hspace{0.3cm} 
    \includegraphics[width=0.3\textwidth]{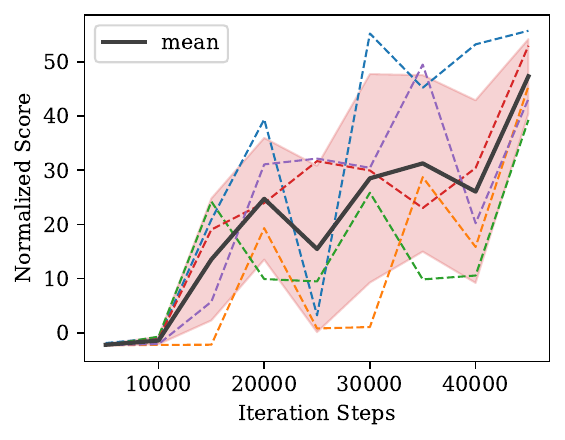}\label{fig:sub12}\\
\end{center}
\vspace{-0.3cm}
    \text{~~~~~~~~~~~~~~~~~~~~(a)~\texttt{maze2d-umaze}~~~~~~~~~~~~~~~~~~~~~~~~~~~~(b)~\texttt{maze2d-medium}~~~~~~~~~~~~~~~~~~~~~~~~~~~~~(c)~\texttt{maze2d-large}}
  \caption{The training curves on \texttt{Maze2d-v1}.}
  \vspace{-3pt}
  \label{maze2d}
\end{figure}

\subsection{\texttt{Kitchen-v0}}
The training curves on three datasets from \texttt{Kitchen-v0} are plotted in Figure \ref{maze2d}.
Overall, the training curves are remarkably stable, and the long context length $K$ is crucial for the stability of learning.

\begin{figure}[h]
\begin{center}
    \includegraphics[width=0.3\textwidth]{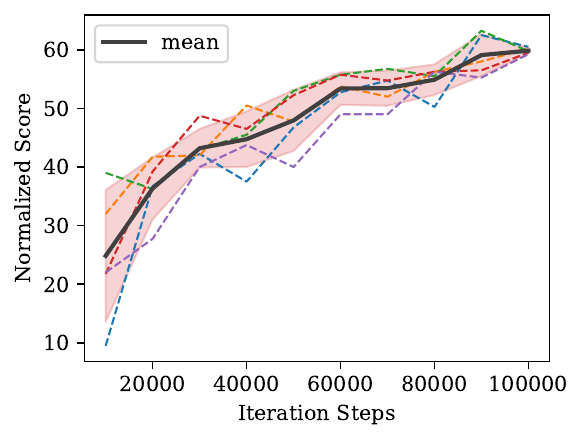}\label{fig:sub13}\hspace{0.3cm}
    \includegraphics[width=0.3\textwidth]{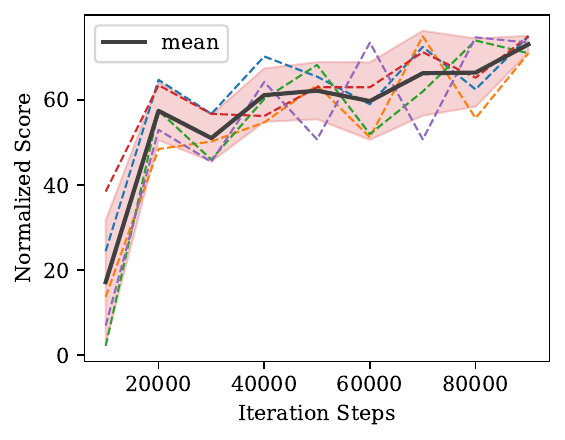}\label{fig:sub14}\hspace{0.3cm} 
    \includegraphics[width=0.3\textwidth]{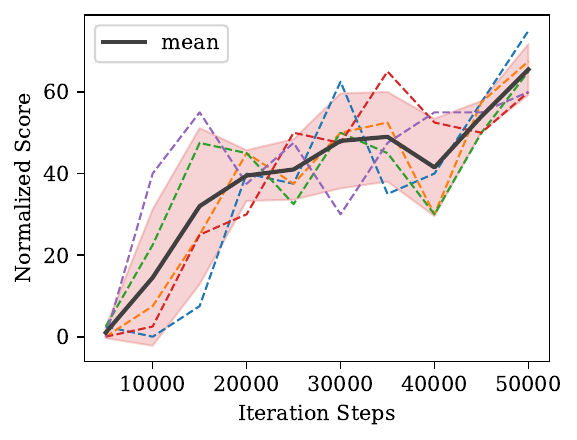}\label{fig:sub15}\\
\end{center}
\vspace{-0.3cm}
    \text{~~~~~~~~~~~~~~~(a)~\texttt{kitchen-complete}~~~~~~~~~~~~~~~~~~~~~(b)~\texttt{kitchen-partial}~~~~~~~~~~~~~~~~~~~~~~~(c)~\texttt{kitchen-mixed}}
  \caption{The training curves on \texttt{Kitchen-v0}.}
  \vspace{-6pt}
  \label{kichen}
\end{figure}

\subsection{\texttt{Antmaze-v2}}
Since we report the best score during training rather than the last score, we do not provide training curves on \texttt{Antmaze}.
Here, we would like to emphasize the reward modification.
\texttt{Antmaze} contains datasets with sparse rewards, where only 1 indicates the reach of the goal while 0 is not.
In order to enhance the reward signal, we multiply the reward by 100.
However, we found that this modification leads to the occurrence of NaN values on dataset \texttt{Antmaze-umaze-diverse}.
Besides, the original reward also occur the NaN values.
So we modify the reward by adding another 1, that is, $\hat{r}=100 \times r + 1$.
We summarize the results of different reward modification in Table \ref{reward}.
\begin{table}[h]
\centering
\caption{The normalized best score on \texttt{Antmaze-v2} dataset with different reward modification.}
\label{reward}
\vspace{6pt}
\begin{tabular}{l|rr}
\toprule
\texttt{Antmaze}        & $\hat{r}=100 \times r$ & $\hat{r}=100 \times r + 1$   \\
\hline
\texttt{umaze}          & \textbf{85$\pm$3.87}   & 84.4$\pm$2.7 \\
\texttt{umaze-diverse}  & NaN           & \textbf{65.8$\pm$4.1} \\
\texttt{medium-play}    & 11.4$\pm$3.78 & \textbf{13.2$\pm$6.1} \\
\texttt{medium-diverse} & 7.2$\pm$2.17  & \textbf{10.6$\pm$6.9} \\
\bottomrule
\end{tabular}
\end{table}


\end{document}